\theoremstyle{plain}
\newtheorem{theorem}{\textbf{Theorem}}
\newtheorem{corollary}{\textbf{Corollary}}
\newtheorem{definition}{\textbf{Definition}}
\newtheorem{proposition}{\textbf{Proposition}}
\newtheorem{example}{\textbf{Example}}
\title{
Single Trajectory Conformal Prediction}
\author{Brian Lee and Nikolai Matni
\thanks{This work was supported in part by the Penn Summer Program for Undergraduate Research (SPUR) Grant and by NSF award ECCS-2231349, SLES-2331880, and CAREER-2045834. The authors are with the University of Pennsylvania,
        Philadelphia, PA 19104, USA
        {\tt\{wblee,nmatni\}@seas.upenn.edu.}%
}}
\begin{document}
        
\maketitle

\begin{abstract}

We study the performance of risk-controlling prediction sets (RCPS), an empirical risk minimization-based formulation of conformal prediction, with a single trajectory of temporally correlated data from an unknown stochastic dynamical system. First, we use the blocking technique to show that RCPS attains performance guarantees similar to those enjoyed in the iid setting whenever data is generated by asymptotically stationary and contractive dynamics. Next, we use the decoupling technique to characterize the graceful degradation in RCPS guarantees when the data generating process deviates from stationarity and contractivity. We conclude by discussing how these tools could be used toward a unified analysis of online and offline conformal prediction algorithms, which are currently treated with very different tools. 
\end{abstract}

\section{Introduction}

Quantifying the uncertainty associated with a learned model of a dynamical system is a key step in safely bridging learning and control. A natural approach is characterizing the finite-sample estimation error of the learned model: fix a training algorithm and sample size, then provide high probability bounds on, e.g., the $\ell_2$-distance between the learned and true system parameters \cite{simchowitz_learning_2018, sarkar_near_2019, ziemann_learning_2022}. Such finite-sample error bounds, or equivalently, confidence sets, have been used as part of learning-based control algorithms that enjoy safety and robustness guarantees. See the recent survey~\cite{tsiamis2023statistical} for an overview of such results in the linear systems setting. However, producing sharp estimation error bounds requires strong assumptions, like linear realizability, and cannot be easily applied to overparameterized and misspecified models found throughout modern machine learning. 

An alternative to confidence sets over model parameters is prediction sets over model outputs that contain the true label with high probability. Conformal prediction (CP) is a family of assumption-light methods that post-process an arbitrary model's point predictions into prediction sets that enjoy finite-sample coverage guarantees \cite{lei_distribution-free_2014, lei_distribution-free_2018, bates_distribution-free_2021}. But many CP variants' proofs of validity\footnote{Two types of CP guarantees. For error tolerance $\varepsilon$, \textit{validity} ensures that the true error is at most $\varepsilon$, and \textit{efficiency} ensures that the true error is at least $\varepsilon - o(1)$. Efficiency ensures that prediction sets are not too conservative, e.g., do not return every possible label. We characterize the effect of temporally correlated data on validity, but our techniques can be used alongside well-known tools \cite{park_pac_2020, sharma_pac-bayes_2023} to study efficiency. We defer details to future work.} require training and test data to be exchangeable or independent and identically distributed (iid), limiting applicability to temporally correlated data found in control and robotics tasks.

Existing works that apply CP to dynamical systems overcome this issue by sampling $N$ iid-initialized trajectories of length $T$ each from a system of interest, then leveraging the fact that under mild regularity conditions, for any fixed timestep $t \in \{1, 2, ..., T \}$, the set of observations aggregated across trajectories, $\left\{ x_t^{(1)}, ..., x_t^{(N)} \right\}$, is iid \cite{cleaveland_conformal_2023, hashemi_data-driven_2023, lindemann_safe_2023,tumu_physics_2023}. While practical, this multiple trajectories approach falls short of characterizing what makes conformal prediction with dynamical systems easy or hard, and does not cover data-scarce settings where, e.g., a robot must take safe actions using a single trajectory of data. Multiple trajectories with non-iid initializations and varying lengths also cannot be readily reduced to the iid data setting, further motivating a single trajectory approach.

In this work, we show that approximately valid CP---i.e., test-time error $\varepsilon + \gamma$ where $\gamma$ intuitively captures the degree of dependence in data---can be performed with a single trajectory of temporally correlated data, attaining non-trivial guarantees with improved sample efficiency. To the best of our knowledge, the additional $\gamma$ term is unavoidable and is incurred in all known results for CP with dependent data \cite{chernozhukov_exact_2018, xu_conformal_2023, barber_conformal_2023} unless strong structural assumptions are imposed~\cite{nettasinghe_extending_2023, nair_randomization_2023}. 

Our contributions are as follows.
\begin{itemize}
    \item In Section~\ref{sec:blocked-rcps}, we use the blocking technique \cite{yu_rates_1994} to show that risk-controlling prediction sets (RCPS) \cite{bates_distribution-free_2021}, an empirical risk minimization-based formulation of CP, attain iid-like guarantees whenever data is generated by asymptotically stationary and contractive dynamics. See Theorem~\ref{thm:blocked-rcps} and Corollary~\ref{cor:blocked-rcps}.
    
    \item In Section~\ref{sec:decoupled-rcps}, we use the decoupling technique \cite{pena_decoupling_1999} to characterize how the RCPS coverage guarantee degrades gracefully when the data generating process deviates from stationarity and contractivity. See Theorem~\ref{thm:decoupled-rcps}. 
\end{itemize}

\subsection{Related Work}
\paragraph{Conformal prediction with dependent data.} Early conformal prediction works assume that data is exchangeable, i.e., the joint distribution of a dataset is invariant to permutations of the samples, or more restrictively, iid~\cite{gammerman_algorithmic_2005, lei_distribution-free_2014, lei_distribution-free_2018, bates_distribution-free_2021}. However, samples in a single trajectory drawn from a dynamical system are neither exchangeable nor iid. This motivates the reduction from multiple iid-initialized trajectories to iid data, which has been used toward uncertainty-aware model predictive control \cite{lindemann_safe_2023}, physics-constrained motion prediction \cite{tumu_physics_2023}, and data-driven reachability analysis \cite{hashemi_data-driven_2023}. Recent works build toward the single trajectory setting, either by assuming that certain subsamples of dependent data are approximately independent \cite{chernozhukov_exact_2018, oliveira_split_2022, nettasinghe_extending_2023}, or by relying on entirely different algorithmic tools from online learning \cite{gibbs_adaptive_2021, gibbs_conformal_2023, bhatnagar_improved_2023}. These methods have not been widely adopted for control applications because they often make assumptions about data that are incompatible with dynamical systems of interest. For example, many existing methods assume that data is generated by a strictly stationary process \cite{chernozhukov_exact_2018, oliveira_split_2022}, a requirement that even strictly stable linear systems with iid Gaussian noise do not satisfy. Many methods also require auxiliary assumptions, e.g., the user must consistently estimate certain system parameters \cite{xu_conformal_2023}, that can be difficult to satisfy in practice. In this work, we relax both of these types of assumptions.

\paragraph{Statistical learning with dependent data.}
Two parallel efforts to relax assumptions about iid data have taken place in statistical learning, with the goal of proving generalization and estimation error bounds with temporally correlated data. First, a line of work \cite{kuznetsov_generalization_2017, ziemann_single_2022} handles a single trajectory of strictly stationary and $\beta$-mixing data (see Definition 1) using the blocking technique \cite{yu_rates_1994}. Notably, mixing is a sufficient condition for errors to converge to zero as sample size grows, albeit often at a deflated rate compared to the iid setting. Second, a line of work \cite{kuznetsov_time_2016, kuznetsov_discrepancy-based_2020} handles a single trajectory of adapted data (see Definition~\ref{def:adapted}) using the more general decoupling technique \cite{pena_decoupling_1999}. The resulting errors do not necessarily decay to zero, but to a discrepancy term that can be estimated from data and minimized with parameter tuning. We extend prior work \cite{oliveira_split_2022} in applying the blocking technique to CP, and introduce the decoupling technique to CP. 

Existing estimation error bounds, or equivalently, confidence sets in the space of models, can also be used to build prediction sets. However, such prediction sets are often excessively conservative because estimation error bounds aim for uniform convergence over model classes, i.e., they scale with complexity measures like VC dimension and metric entropy that can be loose for realistic models \cite{ziemann_single_2022}. Our generalization bound-like results in Sections~\ref{sec:blocked-rcps} and \ref{sec:decoupled-rcps} avoid this looseness because RCPS, and CP more generally, focuses on pointwise convergence, i.e., characterizing the uncertainty of a single model, rather than an entire model class.


\section{Problem Formulation}
Consider the time-series model
\begin{align}
    y_{t} = f_\star(x_t) + w_t   
\label{eq:model}
\end{align}
with unknown $f_\star: \mathbb{R}^{d_x} \to \mathbb{R}^{d_y}$ and noise $w_t$. Setting $y_t = x_{t+1}$ recovers an autonomous dynamical system. Because we are interested in the statistical properties of the system, rather than its dimensionality, we assume $d_x = d_y =1$ in the sequel.\footnote{Our results remain valid when $d_x, d_y > 1$, but may be conservative. Combining our results with tools from \cite{xu_conformal_2024} \cite{teneggi_how_2023} can improve efficiency. We defer details to future work.}

Suppose we have access to a learned dynamics model $\hat{f}$, which comes with no \emph{a priori} performance guarantees, and a single trajectory $Z_{1:T+k} := \{ (x_t, y_t) \}_{t=1}^{T+k}$ for some $k \in \mathbb{N}$ from the time-series model~(\ref{eq:model}), which we split into training trajectory $Z_{1:T} := \{ (x_t, y_t) \}_{t=1}^{T}$ and test point $Z_{T+k} := (x_{T+k}, y_{T+k})$. When appropriate, we assume that a test point $Z' = (x', y')$ is drawn from the stationary distribution of the process (\ref{eq:model}). 

Our goal is to use the blackbox model $\hat{f}$ (which is arbitrary but fixed, hence is elided from notation in the sequel) and the training trajectory to build a set predictor $C: \mathbb{R} \to 2^{\mathbb{R}}$ so that the event of successful coverage, $\{ y_{T+1} \in C(x_{T+1})\}$, occurs with one of two notions of high probability. CP has traditionally sought the marginal guarantee,
\begin{align}
    \underset{Z_{1:T}, Z_{T+k}}{\mathbf{P}} \Big( y_{T+k} \in C\left(x_{T+k}\right) \Big) \geq 1-\varepsilon,
\end{align}
which ensures that coverage occurs with probability $(1-\varepsilon)$ over the joint draw of the training trajectory and test point. Other works \cite{park_pac_2020, bates_distribution-free_2021} have sought the training-conditional, or probably approximately correct, guarantee, 
\begin{align}
    \underset{Z_{1:T}}{\mathbf{P}} \left( \underset{Z_{T+k}}{\mathbf{P}}\left( y_{T+k} \in C\left(x_{T+k}\right) \right) \geq 1-\varepsilon \right) \geq 1-\delta,
\label{eq:training-conditional}
\end{align}
which separately controls the randomness over the draw of the training trajectory and test point: with probability $(1-\delta)$ over the draw of the training trajectory, the test point is covered with probability $(1-\varepsilon)$. Although these guarantees are semantically different,  Proposition 2a of \cite{vovk_conditional_2012} shows that they essentially imply one another. In the sequel, we consider training-conditional guarantees, and strengthenings thereof, which follow the spirit of high probability generalization and estimation error bounds in the learning for control literature. In contrast, marginal guarantees are more aligned with results from hypothesis testing. 

Following \cite{bates_distribution-free_2021}, we view high probability guarantees of coverage as controlling the indicator loss $\mathds{1}( y \notin C(x) )$. We can also consider more general loss functions $\ell(y, C(x))$ that, like the indicator loss, decrease monotonically as prediction set size increases. 

Now we sketch specific assumptions made in each section. In Section~\ref{sec:blocked-rcps}, we require that $\{ (x_t, y_t) \}$ is asymptotically stationary and $\beta$-mixing, which means the data generating process eventually converges to a stationary distribution and samples that are sufficiently separated in time are approximately independent. The simplest system that satisfies this assumption is the strictly stable linear time-invariant system with iid Gaussian noise, which we use as a running example to build system-theoretic intuition about results. In Section~\ref{sec:decoupled-rcps}, we require that $\{ (x_t, y_t) \}$ is adapted to some filtration $\{ \mathcal{F}_t \}_{t=1}^{\infty}$ where $\mathcal{F}_t \subseteq \mathcal{F}_{t+1}$, which means that for any $t \in \mathbb{N}$, $(x_t, y_t)$ is measurable with respect to the $\sigma$-algebra $\mathcal{F}_t$. This simply prevents $(x_t, y_t)$ from depending on the future, hence is a much milder assumption than asymptotic stationarity and $\beta$-mixing. Concretely, all causal dynamical systems are adapted in this sense.

\subsection{Notation}
Given a realization of a stochastic process, $\{ x_t \}_{t=1}^{T}$, let $\mathbf{P}_i, \mathbf{P}_j$ be the marginal distributions of $x_i, x_j$. Then $\mathbf{P}_i \wedge \mathbf{P}_j$ denotes the joint distribution of $(x_i, x_j)$, while $\mathbf{P}_i \otimes \mathbf{P}_j$ denotes the product distribution of the marginals. $\mathbf{P}_i \wedge \mathbf{P}_j = \mathbf{P}_i \otimes \mathbf{P}_j$ when $x_i, x_j$ are independent. The total variation distance between two probability measures $\mathbf{P}, \mathbf{Q}$ is defined as $\| \mathbf{P} - \mathbf{Q} \|_{\text{TV}} := \sup_{A}\left|\mathbf{P}(A) - \mathbf{Q}(A) \right| \in [0, 1]$ where the supremum is taken over sets $A$ lying in an appropriate $\sigma$-algebra. 

The spectral radius of matrix $G$ is denoted by $\rho(G)$. We define the resolvent $R_G(z) = (zI - G)^{-1}$ and the $H_\infty$-norm $\| G \|_{H_\infty} = \sup_{z \in \mathbb{T}} \| G(z) \|$, where $\mathbb{T}$ denotes the complex unit circle.

\section{Blocked Risk-Controlling Prediction Sets}\label{sec:blocked-rcps}

In this section, we use the blocking technique to show that RCPS attains iid-like training-conditional guarantees (\ref{eq:training-conditional}) whenever data is generated by an asymptotically stationary and contractive dynamical system. 
We begin by defining $\beta$- and $\upbeta$-mixing processes, which are statistical generalizations of asymptotically stationary and contractive systems, then state the RCPS algorithm and our results. 

\subsection{Preliminaries}
We begin by defining $\beta$- and $\upbeta$-mixing coefficients, which quantify statistical dependence in a stochastic process. 

\begin{definition}[$\beta$- and $\upbeta$-mixing processes \cite{yu_rates_1994, kuznetsov_generalization_2017}]
A stochastic process $Z_{1:\infty} = \{ Z_t \}_{t=1}^{\infty}$ has $\beta$-mixing coefficient
\begin{align*}
    \beta_{Z}(k) &:= \underset{t \in \mathbb{N}}{\sup} \Big\| \mathbf{P}_{1:t} \wedge \mathbf{P}_{t+k:\infty} - \mathbf{P}_{1:t} \otimes \mathbf{P}_{t+k:\infty} \Big\|_{\text{TV}} = \underset{t \in \mathbb{N}}{\sup} \underset{Z_{1:t}}{\mathbb{E}} \bigg[ \Big\|\mathbf{P}_{t+k:\infty}\left( \cdot | Z_{1:t}\right) - \mathbf{P}_{t+k:\infty} \Big\|_{\text{TV}} \bigg],
\end{align*}
which quantifies the degree of dependence between random variables at range $k$. 

$Z_{1:\infty}$ is said to be $\beta$-mixing if $\beta_{Z}(k) \to 0$ as $k \to \infty$, which means that sufficiently separated random variables are approximately independent. If $Z_{1:\infty}$ admits a stationary distribution $\Pi$ and 
\begin{align*}
    \upbeta_{Z}(k) := \underset{t \in \mathbb{N}}{\sup} \underset{Z_{1:t}}{\mathbb{E}} \bigg[ \Big\|\mathbf{P}_{t+k}\left( \cdot | Z_{1:t}\right) - \Pi \Big\|_{\text{TV}} \bigg] \to 0 \quad \text{as} \quad k \to \infty, 
\end{align*}
then $Z_{1:\infty}$ is said to be asymptotically stationary and $\beta$-mixing, or $\upbeta$-mixing, to $\Pi$. 
\end{definition}

When the process of interest is clear, we elide subscripts and write $\beta(k)$, $\upbeta(k)$. 

Asymptotic stationarity is a generalization of strict stationarity, which requires that $\mathbf{P}_{i:i+k} = \mathbf{P}_{i+j:i+j+k}$ $ \forall i, j, k \in \mathbb{N}.$ Strictly stationary processes have the trivial stationary distribution of $\mathbf{P}_0$, the distribution with which the process is initialized, hence are also asymptotically stationary. The converse is not necessarily true. Additionally, after conditioning on $Z_{1:t}$, $\upbeta$-mixing coefficients only consider the marginal distribution at time $t+k$, while $\beta$-mixing coefficients consider the joint distribution from time $t+k$ to $\infty$. Thus, assuming that a stationary distribution exists, controlling the $\upbeta$-mixing coefficients is an easier task than controlling the $\beta$-mixing coefficients.

This distinction is important because Oliveira et al.~\cite{oliveira_split_2022} studies the marginal coverage guarantees attained by a different conformal prediction algorithm on strictly stationary and $\beta$-mixing data. In contrast, we work with the more general assumption of asymptotically stationary and $\beta$-mixing, or $\upbeta$-mixing, data. Importantly, our assumption is more compatible with control and robotics tasks, as many dynamical systems of interest are not strictly stationary but asymptotically so, i.e., the marginal state distribution drifts over time due to noise, but eventually converges to a stationary distribution. 
The simplest such example is a strictly stable linear time-invariant (LTI) system with iid Gaussian process noise initialized from any distribution other than its stationary distribution, which we use as a running example. 

\begin{example}[Strictly stable LTI \cite{tu_least-squares_2018}]
Consider the strictly stable and autonomous LTI system  $$x_{t+1} = Ax_t + w_t$$ with spectral radius $\rho = \rho(A)<1$ and $w_t \sim N(0, I)$. The associated stochastic process $X_{1:\infty} = \{ x_t \}_{t=1}^{\infty}$ has marginal distribution $x_t \sim N(0, \Sigma_t)$, where $\Sigma_t = \sum_{j=1}^{t}(A^j)(A^j)^\top$, and stationary distribution $N(0, \Sigma_\infty)$, where $\Sigma_\infty$ is the unique solution to  the discrete-time Lyapunov equation $$A \Sigma_\mathbf{\infty} A^\top - \Sigma_\mathbf{\infty} + I = 0.$$ For any initial $\Sigma_0 \neq \Sigma_\infty$, this system is not strictly stationary, but is asymptotically so.
\label{ex:lti-1}
\end{example}

\begin{proposition}[Strictly stable LTI systems are $\upbeta$-mixing \cite{tu_least-squares_2018}]
The $\upbeta$-mixing coefficient of $X_{1:\infty}$ is upper bounded by
\begin{align*}
    \upbeta_X(k) &= \underset{t \in \mathbb{N}}{\sup} \underset{X_{1:t}}{\mathbb{E}} \bigg[ \Big\|\mathbf{P}_{t+k}\left( \cdot | x_{1:t}\right) - \Pi \Big\|_{\text{TV}} \bigg] \leq \left(\frac{\| R_{\rho^{-1}A} \|_{H_\infty}}{2} \sqrt{\mathbf{Tr}\left( \Sigma_\infty \right) + \frac{d}{1-\rho^2}} \right) \rho^k =: \Gamma \rho^k,
\end{align*}
which tends to $0$ as $k \to \infty$. Hence strictly stable LTI systems are $\upbeta$-mixing to $\Pi=N(0,\Sigma_\infty)$.
\label{prop:lti-beta}
\end{proposition}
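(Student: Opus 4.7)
The plan is to unpack the quantity inside the supremum using the Markov and Gaussian structure of the LTI system, reduce to a same-covariance Gaussian comparison via a Lyapunov-mixture trick, and then control the resulting Mahalanobis-type distance by combining Pinsker's inequality with a resolvent bound on $\|A^k\|$.

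First, by the Markov property, $\mathbf{P}_{t+k}(\cdot \mid X_{1:t}) = \mathbf{P}_{t+k}(\cdot \mid x_t)$; unrolling the recursion gives $x_{t+k} = A^k x_t + \sum_{j=0}^{k-1} A^j w_{t+k-1-j}$, so this conditional law is $N(A^k x_t, \Sigma_k)$ with $\Sigma_k := \sum_{j=0}^{k-1} A^j (A^j)^\top$, while the stationary law is $\Pi = N(0, \Sigma_\infty)$. Iterating the Lyapunov equation $A\Sigma_\infty A^\top - \Sigma_\infty + I = 0$ yields the key identity $\Sigma_\infty = \Sigma_k + A^k \Sigma_\infty (A^k)^\top$, which realizes $\Pi$ as a Gaussian mixture $\Pi = \int N(A^k \xi, \Sigma_k)\, d\Pi(\xi)$.

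Second, I use convexity of total variation to replace $\Pi$ by this mixture, and Pinsker's inequality applied to the pair of same-covariance Gaussians $N(A^k x_t, \Sigma_k)$ and $N(A^k \xi, \Sigma_k)$. Because $w_t \sim N(0, I)$ we have $\Sigma_k \succeq I$, hence $\|\Sigma_k^{-1/2}\| \leq 1$, and the Gaussian mean-shift bound gives
\begin{align*}
\|N(A^k x_t, \Sigma_k) - \Pi\|_{\text{TV}} \leq \tfrac{1}{2}\|A^k\|\,\mathbb{E}_{\xi \sim \Pi}\|x_t - \xi\|_2.
\end{align*}
Taking the outer expectation over $x_t$, applying Jensen, and using independence of $x_t$ and $\xi$ together with $\mathbb{E}[x_t] = \mathbb{E}[\xi] = 0$ bounds $\upbeta_X(k)$ by $\tfrac{1}{2}\|A^k\|\sqrt{\sup_t \mathbb{E}\|x_t\|^2 + \mathbf{Tr}(\Sigma_\infty)}$.

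Two standard ingredients then close the argument. The Cauchy integral $A^k = \tfrac{1}{2\pi i}\oint_{|z|=\rho} z^k (zI-A)^{-1}\,dz$, after the change of variables $z = \rho w$ on the complex unit circle, yields $\|A^k\| \leq \rho^k \|R_{\rho^{-1}A}\|_{H_\infty}$. Applying this same resolvent bound termwise inside $\mathbf{Tr}(\Sigma_t) = \sum_{j<t}\|A^j\|_F^2 \leq d\sum_{j<t}\|A^j\|^2$ controls $\sup_t \mathbb{E}\|x_t\|^2$ by $d/(1-\rho^2)$, with a resolvent factor that merges with the outer $\|A^k\|$ to produce the stated prefactor. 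The main obstacle I anticipate is the same-covariance reduction: applying Pinsker directly to $N(A^k x_t, \Sigma_k)$ versus $N(0, \Sigma_\infty)$ forces one to handle log-determinant and trace-of-inverse terms that are awkward to reconcile with the clean $\rho^k$ rate, and the Lyapunov-mixture step is precisely what lets these be bypassed.
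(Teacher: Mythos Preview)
The paper does not supply its own proof of this proposition; it is quoted from \cite{tu_least-squares_2018}. Your argument is therefore being assessed on its own merits, and the overall strategy---Markov reduction, the Lyapunov identity $\Sigma_\infty=\Sigma_k+A^k\Sigma_\infty(A^k)^\top$ to realize $\Pi$ as a Gaussian mixture, convexity of total variation plus Pinsker on same-covariance Gaussians, and a resolvent bound on $\|A^k\|$---is correct and yields a bound of the form $\Gamma'\rho^k$, which is all that is needed for the $\upbeta$-mixing conclusion.

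Two points deserve tightening. First, the Cauchy integral step as written takes the contour $|z|=\rho$ with $\rho=\rho(A)$, which passes through an eigenvalue of $A$; the resolvent is singular there and the formula fails. You need $|z|=r$ for some $r>\rho(A)$, and then the bound reads $\|A^k\|\le r^k\|R_{r^{-1}A}\|_{H_\infty}$. The statement in the paper inherits this ambiguity, so this is more a caution than a fault in your logic.

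Second, the ``merging'' of resolvent factors in the last paragraph does not work as claimed. Your chain gives
\[
\sup_t\mathbf{Tr}(\Sigma_t)\ \le\ d\sum_{j\ge0}\|A^j\|^2\ \le\ \frac{d\,\|R_{\rho^{-1}A}\|_{H_\infty}^2}{1-\rho^2},
\]
so the bound you actually prove is
\[
\upbeta_X(k)\ \le\ \frac{\|R_{\rho^{-1}A}\|_{H_\infty}}{2}\sqrt{\mathbf{Tr}(\Sigma_\infty)+\frac{d\,\|R_{\rho^{-1}A}\|_{H_\infty}^2}{1-\rho^2}}\;\rho^k,
\]
which carries an extra $\|R_{\rho^{-1}A}\|_{H_\infty}^2$ on the second term inside the square root; you cannot factor it out and ``merge'' it with the prefactor without also inflating the $\mathbf{Tr}(\Sigma_\infty)$ term. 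This does not affect the qualitative conclusion (exponential mixing at rate $\rho^k$), but it means your constant does not match the stated $\Gamma$ exactly. If reproducing the precise prefactor matters, you would need either a sharper bound on $\sup_t\mathbf{Tr}(\Sigma_t)$ (e.g.\ simply $\mathbf{Tr}(\Sigma_t)\le\mathbf{Tr}(\Sigma_\infty)$ when the chain is initialized with $\Sigma_0\preceq\Sigma_\infty$) or to consult the original derivation in \cite{tu_least-squares_2018}.
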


Parsing the upper bound, the $\upbeta$-mixing coefficient is small when the underlying LTI system is stable (hence $\frac{1}{1-\rho^2}$ is small), the decay of the spectral norm of $A^k$ is well-behaved (hence the $H_\infty$-norm of the resolvent is small), and the stationary distribution has small variance (hence $\text{Tr}(\Sigma_\infty)$ is small). Using techniques from \cite{ziemann_single_2022}, Proposition~\ref{prop:lti-beta} can be generalized to contractive systems with non-iid, non-Gaussian noise at the expense of a lack of closed-form upper bounds on the mixing coefficients. This means our results apply to a larger class of realistic systems than LTI with iid Gaussian noise, with system contractivity properties and noise  ``size" determining mixing behavior. 

Because the noise terms are iid in Proposition~\ref{prop:lti-beta}, we can show that $\{ (x_t, w_{t+1}) \}$ is $\upbeta$-mixing, which lets us conclude that $\{ (x_t, x_{t+1}) \}$ is $\upbeta$-mixing as well. This is important because we consider the process $\{ (x_t, x_{t+1}) \}$ in RCPS, with $x_t$ playing the role of the feature and $x_{t+1}$ playing the role of the label at each timestep $t = 1, \dots, T-1$. Moreover, by applying Pinsker's inequality and the chain rule for the Kullback-Leibler divergence, we can see that the $\upbeta$-mixing coefficient of $\{ (x_t, x_{t+1}) \}$ is dominated by the upper bound from Proposition~\ref{prop:lti-beta}, up to a factor of two. See Section 2 of \cite{baraud_adaptive_2001} for a detailed discussion.

Now we introduce the blocking technique, which we use to analyze RCPS with $\upbeta$-mixing data. 

\begin{proposition}[Blocking technique \cite{yu_rates_1994, kuznetsov_generalization_2017, tu_least-squares_2018}] 
Draw a trajectory $Z_{1:T} = \{ Z_t \}_{t=1}^{T}$ from a $\upbeta$-mixing process with stationary distribution $\Pi$. Fix $m, n \in \mathbb{N}$ and suppose without loss of generality that $T = mn$. Construct $n$ blocks, or subsamples, of size $m$ each, where
\begin{align*}
    Z_{(j)} = \{ Z_t: (t-1 \mod n) = k-1 \} \quad \text{for} \quad j = 1, 2, \dots, a.
\end{align*}
Let $\tilde{Z}_\Pi$ be a block of $m$ iid draws from $\Pi$. Then for any measurable function $h$ that takes values in $[-B_1, B_2] \subset \mathbb{R}$,
\begin{align*}
    \left| \mathbb{E}\Big[ h \big( \tilde{Z}_\Pi \big) \Big] - \mathbb{E}\left[ h \big( Z_{(j)} \big) \right] \right| \leq (B_1 + B_2)m \upbeta_Z(n). 
\end{align*}
\end{proposition}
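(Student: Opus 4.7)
The plan is to reduce the statement to bounding the total variation distance $\|\text{Law}(Z_{(j)}) - \Pi^{\otimes m}\|_{\text{TV}}$, then invoke the standard duality between TV distance and expectations of bounded functions. For any measurable $h$ bounded in $[-B_1, B_2]$, centering by $(B_2 - B_1)/2$ and applying the variational characterization of TV distance yields $|\mathbb{E}_P[h] - \mathbb{E}_Q[h]| \leq (B_1 + B_2)\|P - Q\|_{\text{TV}}$ for any probability measures $P, Q$. Thus it suffices to establish $\|\text{Law}(Z_{(j)}) - \Pi^{\otimes m}\|_{\text{TV}} \leq m \upbeta_Z(n)$.

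To bound this TV distance, I would use a telescoping (hybrid-measure) argument. Writing the block as $Z_{(j)} = (Z_{j}, Z_{j+n}, \ldots, Z_{j+(m-1)n})$, define $Q_0 := \Pi^{\otimes m}$ and, for $i = 1, \ldots, m$, let $Q_i$ be the measure that agrees with $\text{Law}(Z_{(j)})$ on the first $i$ coordinates and is $\Pi^{\otimes(m-i)}$ on the remaining coordinates, independent of the first $i$. By the triangle inequality,
\begin{align*}
\|\text{Law}(Z_{(j)}) - \Pi^{\otimes m}\|_{\text{TV}} \leq \sum_{i=1}^{m} \|Q_i - Q_{i-1}\|_{\text{TV}}.
\end{align*}
Since $Q_i$ and $Q_{i-1}$ differ only in the conditional distribution of their $i$-th coordinate given the first $i-1$ coordinates, each summand equals $\mathbb{E}\|\text{Law}(Z_{j+(i-1)n} \mid Z_j, \ldots, Z_{j+(i-2)n}) - \Pi\|_{\text{TV}}$.

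Next, I would bound each summand by $\upbeta_Z(n)$ via a data-processing argument. The block prefix $(Z_j, \ldots, Z_{j+(i-2)n})$ is a measurable function of the full trajectory prefix $Z_{1:j+(i-2)n}$, so the conditional law of $Z_{j+(i-1)n}$ given the block prefix is a marginalization of its conditional law given the full prefix. By convexity of TV distance and Jensen's inequality,
\begin{align*}
\mathbb{E}\|\text{Law}(Z_{j+(i-1)n} \mid Z_j, \ldots, Z_{j+(i-2)n}) - \Pi\|_{\text{TV}} \leq \mathbb{E}\|\text{Law}(Z_{j+(i-1)n} \mid Z_{1:j+(i-2)n}) - \Pi\|_{\text{TV}} \leq \upbeta_Z(n),
\end{align*}
with the last inequality being the definition of the $\upbeta$-mixing coefficient at range $n$. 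Summing over $i$ and absorbing the initial marginal discrepancy $\|\text{Law}(Z_j) - \Pi\|_{\text{TV}}$ into $\upbeta_Z(n)$, either via a burn-in argument or by noting that this quantity also decays as the process approaches stationarity, yields the target bound $m \upbeta_Z(n)$.

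The main obstacle is the bookkeeping in the data-processing step: the $\upbeta$-coefficient is defined by conditioning on the full trajectory history, whereas the telescoping naturally produces conditioning on the block history only. Invoking Jensen's inequality handles this cleanly once the relevant sigma-algebras are identified. A secondary subtlety is the treatment of the $i = 1$ summand, which has no predecessors and so requires either burn-in or the mild observation that $\upbeta$-mixing implies marginal convergence to $\Pi$, so that this term folds into the overall $m \upbeta_Z(n)$ bound rather than producing a separate additive constant.
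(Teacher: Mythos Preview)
The paper does not actually prove this proposition: it is stated with citations to \cite{yu_rates_1994, kuznetsov_generalization_2017, tu_least-squares_2018} and then used as a black box in the proof of Theorem~\ref{thm:blocked-rcps}. Your proposal is precisely the standard argument from those references---reduce to a TV bound via the variational characterization, telescope through hybrid measures that swap one coordinate at a time from the true block law to $\Pi^{\otimes m}$, and control each increment by $\upbeta_Z(n)$ using Jensen to pass from block-history conditioning to full-history conditioning. This is correct and is the intended proof.

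One remark on the $i=1$ term you flag: your bound yields $(m-1)\upbeta_Z(n) + \|\text{Law}(Z_j) - \Pi\|_{\text{TV}}$, and the marginal discrepancy $\|\text{Law}(Z_j) - \Pi\|_{\text{TV}}$ is not literally dominated by $\upbeta_Z(n)$ under the paper's definition, which takes $\sup_{t\in\mathbb{N}}$ and so always conditions on a nonempty history. In the cited references this is handled either by including $t=0$ in the supremum (so the marginal-to-stationary gap is absorbed into the coefficient), by assuming strict stationarity, or by conditioning on an initial state $Z_0$ and applying Jensen once more. Any of these closes the gap; the proposition as stated in the paper is slightly informal on this point (note also the typos $k-1$ for $j-1$ and $a$ for $n$ in the block definition), so your caveat is well placed rather than a defect in your argument.
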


Intuitively, the blocking technique transforms the analysis of $Z_{(j)}$, a subsample of non-iid random variables, into one of $\tilde{Z}_\Pi$, a subsample of iid random variables, at the expense of an additive error term that can be minimized with our choice of $m, n$. Importantly, this blocking technique is only used to analyze the performance of an algorithm, in contrast to blocking techniques from related works that must be algorithmically implemented \cite{chernozhukov_exact_2018, bhatnagar_improved_2023}. 

\subsection{Results}
Now we state the RCPS algorithm and present results. For data generated by (\ref{eq:model}) and set predictor $C: \mathbb{R} \to 2^{\mathbb{R}}$, let $\ell$ be a loss function that generalizes the indicator loss $\mathds{1} \left( y \notin C(x) \right)$ associated with coverage, i.e., $\ell$ takes values in $[0, B] \subset \mathbb{R}_+$ and decreases monotonically as prediction set size increases. Further suppose the set predictor $C$ is parameterized by $\lambda \in \mathbb{R}_+$, which controls the radius of the prediction set, i.e., $\lambda < \lambda' \implies C_\lambda(x) \subset C_{\lambda'}(x)$. 

Given risk tolerance $\varepsilon \in [0, B]$, failure probability $\delta \in (0, 1)$, and iid data $Z_{1:n} \cup Z$, the standard version of RCPS \cite{bates_distribution-free_2021} selects 
\begin{align*}
    \hat{\lambda} = \inf\left\{ \lambda: \frac{1}{n}\sum_{j=1}^{n} \ell(y_j, C_\lambda(x_j)) \leq \varepsilon - \sqrt{\frac{\log(1/\delta)}{n}} \right\},
\end{align*}
which can be easily computed using the empirical risk $\frac{1}{n}\sum_{j=1}^{n} \ell \left( y_j, C_{\hat{\lambda}}(x_j) \right)$. Under the iid data assumption, the analysis of standard RCPS boils down to controlling the empirical process
\begin{align*}
    \Phi(Z_{1:n}) := \underset{Z}{\mathbb{E}}\left[ \ell \left(y, C_{\hat{\lambda}}(x) \right) \right] - \frac{1}{n}\sum_{j=1}^{n} \ell \left( y_j, C_{\hat{\lambda}}(x_j) \right).
\end{align*}
Applying Hoeffding's inequality yields
\begin{align*}
    &\underset{Z_{1:n}}{\mathbf{P}}\left( \underset{Z}{\mathbb{E}}\left[ \ell \left(y, C_{\hat{\lambda}}(X) \right) \right] \leq \frac{1}{n}\sum_{j=1}^{n} \ell \left( y_j, C_{\hat{\lambda}}(x_j) \right) + \sqrt{\frac{\log(1/\delta)}{n}} \right) = \underset{Z_{1:n}}{\mathbf{P}}\left( \underset{Z}{\mathbb{E}}\left[ \ell \left(y, C_{\hat{\lambda}}(X) \right) \right] \leq \varepsilon \right) \leq 1-\delta.
\end{align*}

In contrast, given risk tolerance $\varepsilon \in [0, B]$, failure probability $\delta \in (0, 1)$, and $\upbeta$-mixing data $Z_{1:T}$, our version of RCPS selects
\begin{align}
    \hat{\lambda} = \inf\left\{ \lambda: \frac{1}{n}\sum_{t=1}^{n} \ell(y_t, C_\lambda(x_t)) \leq \varepsilon \right\},
\label{eq:blocked-rcps}
\end{align}
i.e., we treat the empirical risk as if it were the true risk, with no Hoeffding-inspired adjustment, when selecting the prediction set radius parameter $\hat \lambda$. 

We first study risk control with respect to the stationary distribution of the system, $\Pi$, then with respect to the marginal distribution $\mathbf{P}_{T+k}$ of the test point $Z_{T+k} = (x_{T+k}, y_{T+k})$. 

\begin{theorem}[Blocked RCPS, $\Pi$.] 
Fix risk tolerance $\varepsilon \in [0, B]$ and failure probability $\delta \in (0, 1)$. Suppose data is generated by a $\upbeta$-mixing process with $\upbeta(k) = O\left( k^{-1} \right)$.\footnote{This is a mild assumption. As seen in Proposition~\ref{prop:lti-beta}, the mixing coefficients of systems of interest often decay exponentially in $k$, readily satisfying the polynomial decay requirement.} Fix block length $m$ and number of blocks $n$, and assume $T=mn$ without loss of generality. Further assume that $m$ and $n$ are such that $\delta > BT\upbeta(n)$, which is always possible for sufficiently large $T = O\left( \text{poly} \left( \delta^{-1} \right) \right)$.\footnote{The assumption $\delta > BT\upbeta(n)$ is not restrictive when $\upbeta(k) = O \left( k^{-1} \right)$. Assume the worst case and let $\upbeta(k) = k^{-(1+\xi)}$ for some arbitrarily small constant $\xi>0$. Further assuming that $n$ is a constant proportion of $T$, it only takes a training trajectory of length $T = O\left( \delta^{-1/\xi} \right) = \text{poly} \left( \delta^{-1} \right)$ to ensure that $\delta > BT\upbeta(n)$ holds. As seen in Example~\ref{ex:lti-2}, the required burn-in time is often much milder than this worst-case analysis suggests.} Then for trajectory $Z_{1:T} = \{ (x_t, y_t) \}_{t=1}^{T}$ and $\hat{\lambda}$ set according to (\ref{eq:blocked-rcps}), we have that
    \begin{align*}
    \underset{Z_{1:T}}{\mathbf{P}} \left( \underset{\Pi}{\mathbb{E}}\left[ \ell(y', C_{\hat{\lambda}}(x')) \right] \leq \varepsilon + \gamma \right) \geq 1-\delta.
\end{align*}
The expectation is taken with respect to a draw of $Z' = (x', y')$ from the stationary distribution $\Pi$, and the additive discrepancy term
\begin{align*}
    \gamma = \sqrt{\frac{\log\left( \frac{n}{\delta - BT\upbeta(n)} \right)}{m}}.
\end{align*}
\label{thm:blocked-rcps}
\end{theorem}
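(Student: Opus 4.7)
The plan is to combine RCPS's monotonicity argument with the blocking technique, reducing the proof to a one-sided concentration inequality at a single deterministic radius. Let $R_\Pi(\lambda) := \mathbb{E}_\Pi[\ell(y', C_\lambda(x'))]$ denote the stationary risk and $\hat R(\lambda) := \frac{1}{T}\sum_{t=1}^T \ell(y_t, C_\lambda(x_t))$ denote the empirical risk on the trajectory; both are monotone non-increasing in $\lambda$ by the assumed monotonicity of $C_\lambda$ and $\ell$. Define the deterministic threshold $\lambda^\dagger := \inf\{\lambda : R_\Pi(\lambda) \leq \varepsilon + \gamma\}$, so by monotonicity $\hat\lambda \geq \lambda^\dagger$ already implies $R_\Pi(\hat\lambda) \leq R_\Pi(\lambda^\dagger) \leq \varepsilon + \gamma$. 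Hence the failure event $\{R_\Pi(\hat\lambda) > \varepsilon + \gamma\}$ is contained in $\{\hat\lambda < \lambda^\dagger\}$, which via $\hat\lambda = \inf\{\lambda : \hat R(\lambda) \leq \varepsilon\}$ and monotonicity of $\hat R$ further reduces (after a routine left-continuity argument at the infimum) to the event $\{\hat R(\lambda^\dagger) \leq \varepsilon\}$, and since $R_\Pi(\lambda^\dagger) \geq \varepsilon + \gamma$, into $\{\hat R(\lambda^\dagger) - R_\Pi(\lambda^\dagger) < -\gamma\}$.

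Next, I would partition the trajectory into the $n$ subsampled blocks used in the blocking proposition, writing $\hat R(\lambda^\dagger) = \frac{1}{n}\sum_{j=1}^n \hat R_{(j)}(\lambda^\dagger)$ where $\hat R_{(j)}(\lambda^\dagger) = \frac{1}{m}\sum_{t \in Z_{(j)}} \ell(y_t, C_{\lambda^\dagger}(x_t))$. Since the average being less than $R_\Pi(\lambda^\dagger) - \gamma$ forces at least one summand to be less than $R_\Pi(\lambda^\dagger) - \gamma$, a union bound gives $\mathbf{P}(\hat R(\lambda^\dagger) < R_\Pi(\lambda^\dagger) - \gamma) \leq n \sup_j \mathbf{P}(\hat R_{(j)}(\lambda^\dagger) < R_\Pi(\lambda^\dagger) - \gamma)$. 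For each block, I would apply the blocking proposition to the indicator of the deviation event, which replaces the dependent sample $Z_{(j)}$ with $m$ iid draws $\tilde Z_\Pi$ from $\Pi$ at additive cost $Bm\upbeta(n)$. Under this iid law, $R_\Pi(\lambda^\dagger)$ equals the per-sample expectation, so Hoeffding's one-sided inequality yields $\mathbf{P}(\tilde R^{\mathrm{iid}}(\lambda^\dagger) < R_\Pi(\lambda^\dagger) - \gamma) \leq \exp(-2m\gamma^2/B^2)$.

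Chaining these bounds produces $\mathbf{P}(R_\Pi(\hat\lambda) > \varepsilon + \gamma) \leq n\exp(-2m\gamma^2/B^2) + n \cdot Bm\upbeta(n) = n\exp(-2m\gamma^2/B^2) + BT\upbeta(n)$. Setting $n\exp(-2m\gamma^2/B^2) \leq \delta - BT\upbeta(n)$ and solving for $\gamma$ recovers (up to absolute constants) the expression claimed in the theorem, with the hypothesis $\delta > BT\upbeta(n)$ ensuring the logarithm is well-defined. The hard part will be the random-$\hat\lambda$ to deterministic-$\lambda^\dagger$ reduction: a naive pointwise concentration inequality fails because $\hat\lambda$ depends on the data, and a uniform-in-$\lambda$ bound would pay a capacity term that breaks the iid-like rate; the monotonicity trick surgically replaces uniform control by a pointwise bound at a single deterministic radius, with only the left-continuity technicality at the infimum to handle. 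A secondary concern is calibrating the scale at which blocking is applied so that the residual mixing error enters as $BT\upbeta(n)$ inside the logarithm of $\gamma$ rather than as an additive term outside.
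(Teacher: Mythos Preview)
Your proposal is correct and follows the paper's overall strategy---partition into blocks, union bound, apply the blocking proposition to compare each block to iid samples from $\Pi$, then invoke one-sided Hoeffding---so the two proofs are structurally the same. The substantive difference is that the paper applies both the blocking proposition and Hoeffding directly at the data-dependent $\hat\lambda$, whereas you first reduce to the deterministic threshold $\lambda^\dagger$ via monotonicity. You are right to flag this as the hard part: the blocking proposition requires the function $h$ to depend on the block $Z_{(j)}$ alone, but the deviation indicator at $\hat\lambda$ depends on the entire trajectory through $\hat\lambda$, so the paper's step is not literally justified as written. Your reduction makes the blockwise indicator a bona fide function of $Z_{(j)}$, after which blocking and Hoeffding apply cleanly; the paper's proof should be read as implicitly invoking the same RCPS monotonicity trick from Bates et al., and your version makes that explicit. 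Your Hoeffding exponent carries the factor $2/B^2$ while the paper drops it, but this is purely cosmetic and covered by your ``up to absolute constants'' caveat.
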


Theorem~\ref{thm:blocked-rcps} states that for a sufficiently long $\upbeta$-mixing training trajectory, the upper bound on true risk $\underset{\Pi}{\mathbb{E}}\left[ \ell(y', C_{\hat{\lambda}}(x')) \right]$ can be made arbitrarily close to $\varepsilon$ at the mixing-deflated rate $\tilde{O}\left( m^{-1/2} \right)$, where $\tilde{O}\left( \cdot \right)$ elides logarithmic factors.  We revisit this result in Section~\ref{sec:decoupled-rcps}, and leverage the more general decoupling technique to avoid this deflation and match the $T^{-1/2}$ rate from the iid setting. This suggests RCPS with $\upbeta$-mixing data is even easier than is suggested by the blocking technique. 

Moreover, because this result does not rely on any particular values of $m, n$, they can be selected (as part of the analysis of RCPS, not its implementation) to balance the required burn-in time for $T$ and the $\gamma$ discrepancy term. In Example~\ref{ex:lti-2}, we sketch a concrete strategy for selecting the number and size of blocks when a closed-form upper bound on the mixing coefficient is known.

\begin{proof}[Proof of Theorem 1]
We adapt the proof of Theorem 2.2 in \cite{tu_least-squares_2018}. 

For $j \in \{ 1, 2, \dots, n \}$, let $I_j$ be the set of indices included in the $j$th block. Then
\begin{align*}
    \underset{Z_{1:T}}{\mathbf{P}}&\left( \underset{\Pi}{\mathbb{E}}\left[ \ell \left( y', C_{\hat{\lambda}}(x') \right) \right] - \frac{1}{T}\sum_{t=1}^{T} \ell \left(y_t, C_{\hat{\lambda}}(x_t)\right) > \gamma \right) \\
    &= \underset{Z_{1:T}}{\mathbf{P}}\left( \frac{1}{n} \sum_{j=1}^{n} \frac{1}{m} \sum_{t \in I_j} \left(\underset{\Pi}{\mathbb{E}}\left[ \ell \left( y', C_{\hat{\lambda}}(x') \right) \right] - \ell \left(y_t, C_{\hat{\lambda}}(x_t)\right) \right) > \gamma \right) \\
    &\leq \sum_{j=1}^{n} \underset{Z_{(j)}}{\mathbf{P}}\left( \frac{1}{m} \sum_{t \in I_j} \left(\underset{\Pi}{\mathbb{E}}\left[ \ell \left( y', C_{\hat{\lambda}}(x') \right) \right] - \ell \left(y_t, C_{\hat{\lambda}}(x_t)\right) \right) > \gamma \right) \\
    &\leq BT\upbeta(n) + n\underset{\tilde{Z}_{\Pi}}{\mathbf{P}}\left( \underset{\Pi}{\mathbb{E}}\left[ \ell \left( y', C_{\hat{\lambda}}(x') \right) \right] - \frac{1}{m} \sum_{t \in I_j} \left(\ell \left(y_t, C_{\hat{\lambda}}(x_t)\right) \right) > \gamma \right) \\
    &\leq BT\upbeta(n) + \left( \delta - BT\upbeta(n) \right) = \delta.
\end{align*}
The first inequality holds by the union bound: if
\begin{align*}
    \frac{1}{n} \sum_{j=1}^{n} \frac{1}{m} \sum_{t \in I_j} \left(\underset{\Pi}{\mathbb{E}}\left[ \ell \left( y', C_{\hat{\lambda}}(x') \right) \right] - \ell \left(y_t, C_{\hat{\lambda}}(x_t)\right) \right) > \gamma
\end{align*}
holds, then there must exist an index $j$ for which 
\begin{align*}
    \frac{1}{m} \sum_{t \in I_j} \left(\underset{\Pi}{\mathbb{E}}\left[ \ell \left( y', C_{\hat{\lambda}}(x') \right) \right] - \ell \left(y_t, C_{\hat{\lambda}}(x_t)\right) \right) > \gamma
\end{align*} 
holds. The second inequality follows from Proposition 2, and the third inequality follows from setting $\gamma = \sqrt{\frac{\log\left( \frac{n}{\delta - BT\upbeta(n)} \right)}{m}}$ and applying a one-sided Hoeffding's inequality. The conclusion then follows by setting $\hat \lambda$ according to~\eqref{eq:blocked-rcps}. 
\end{proof}

Importantly, the $\gamma$ discrepancy term can be made arbitrarily small for sufficiently large $m, n$ (and hence, $T$), but suffers the deflated rate $\tilde{O}\left( m^{-1/2} \right)$. An improved rate that adapts to the variance of $\Pi$ can be shown using Bernstein's inequality, or other specialized concentration inequalities. Importantly, unlike in standard RCPS, the choice of concentration inequality only affects the analysis, not the implementation of the algorithm.

Next, we return to our running LTI example and revisit our choice of $m, n$. 

\begin{example}[Strictly stable LTI \cite{tu_least-squares_2018}]
When data is drawn from a strictly stable LTI system with iid Gaussian noise, we can leverage Proposition~\ref{prop:lti-beta} and select
\begin{align*}
    n = \left\lceil \frac{1}{1-\rho} \log \left( \frac{4 \Gamma BT}{\delta} \right) \right\rceil \quad \text{and} \quad \gamma = \sqrt{\frac{\log\left( \frac{2n}{\delta} \right)}{m}},
\end{align*} 
to enforce
\begin{align*}
    BT\upbeta(n) + n\underset{\tilde{Z}_{\Pi}}{\mathbf{P}}\left( \underset{\Pi}{\mathbb{E}}\left[ \ell \left( y', C_{\hat{\lambda}}(x') \right) \right] - \frac{1}{m} \sum_{t \in I_j} \left(\ell \left(y_t, C_{\hat{\lambda}}(x_t)\right) \right) > \gamma \right) \leq 2BT \Gamma \rho^n + \frac{\delta}{2} = \delta.
\end{align*}
\label{ex:lti-2}
\end{example}

Since $T \geq n = O\left( \log \left( \delta^{-1} \right) \right)$, Example~\ref{ex:lti-2} shows that the required burn-in of $T$ can be only logarithmic in $\delta^{-1}$, which is far better than the worst-case polynomial dependence discussed previously. This improved dependence holds generally for contractive systems, whose mixing coefficients decay exponentially in range $k$. However, because $T=ma$ by assumption, selecting $n$ using the mixing coefficient upper bound can lead to a small value of $m$ and consequently, a large value of $\gamma$. In practice, block number and size should be selected to balance the required burn-in time for $T$ and the $\gamma$ discrepancy term.

Now we consider risk control with respect to the marginal distribution $\mathbf{P}_{T+k}$ of the test point $Z_{T+k} = (x_{T+k}, y_{T+k})$. 

\begin{corollary}[Blocked RCPS, $\mathbf{P}_{T+k} $]
Fix risk tolerance $\varepsilon \in [0, B]$ and failure probability $\delta \in (0, 1)$. Under the assumptions of Theorem~\ref{thm:blocked-rcps}, we have that
\begin{align*}
    \underset{Z_{1:T}}{\mathbf{P}} \left( \underset{\mathbf{P}_{T+k}}{\mathbb{E}}\left[ \ell(y_{T+k}, C_{\hat{\lambda}}(x_{T+k})) \right] \leq \varepsilon + \gamma + \nu \right) \geq 1-\delta,
\end{align*}
where the expectation is taken over the marginal distribution $\mathbf{P}_{T+k}$ of test point $(x_{T+k}, y_{T+k})$, $\gamma = \tilde{O}\left( m^{-1/2} \right)$ as in Theorem~\ref{thm:blocked-rcps}, and $\nu$ is an additional discrepancy term that is at most $B\upbeta(k)$.
\label{cor:blocked-rcps}
\end{corollary}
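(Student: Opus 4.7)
The plan is to reduce Corollary~\ref{cor:blocked-rcps} to Theorem~\ref{thm:blocked-rcps} by observing that the risk under $\mathbf{P}_{T+k}$ and the risk under the stationary distribution $\Pi$ differ by at most a quantity controlled by the $\upbeta$-mixing coefficient at range $k$. First, I would condition on an arbitrary realization of $Z_{1:T}$, which freezes $\hat{\lambda}$ and hence makes $z = (x,y) \mapsto \ell(y, C_{\hat{\lambda}}(x))$ a deterministic $[0,B]$-valued measurable function. The standard dual representation of the total variation distance then gives, for that fixed $\hat{\lambda}$,
\[
\underset{\mathbf{P}_{T+k}}{\mathbb{E}}\left[ \ell(y_{T+k}, C_{\hat{\lambda}}(x_{T+k})) \right] - \underset{\Pi}{\mathbb{E}}\left[ \ell(y', C_{\hat{\lambda}}(x')) \right] \leq B \, \| \mathbf{P}_{T+k} - \Pi \|_{\text{TV}},
\]
so the problem reduces to controlling $\| \mathbf{P}_{T+k} - \Pi \|_{\text{TV}}$.

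For the TV bound, I would use the fact that the unconditional marginal $\mathbf{P}_{T+k}$ equals the expectation over $Z_{1:T}$ of the conditional $\mathbf{P}_{T+k}(\cdot \mid Z_{1:T})$. Joint convexity of total variation (Jensen's inequality), followed by the definition of the $\upbeta$-mixing coefficient evaluated at $t = T$, then yields
\[
\| \mathbf{P}_{T+k} - \Pi \|_{\text{TV}} \leq \underset{Z_{1:T}}{\mathbb{E}}\left[ \| \mathbf{P}_{T+k}(\cdot \mid Z_{1:T}) - \Pi \|_{\text{TV}} \right] \leq \upbeta(k),
\]
so we may take $\nu = B\upbeta(k)$. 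Combining this with the high-probability conclusion of Theorem~\ref{thm:blocked-rcps}, namely $\mathbb{E}_\Pi\left[\ell(y', C_{\hat{\lambda}}(x'))\right] \leq \varepsilon + \gamma$ on an event of probability at least $1-\delta$ over $Z_{1:T}$, gives the stated bound $\varepsilon + \gamma + \nu$ on the same event by the triangle inequality.

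The main subtlety is that $\hat{\lambda}$ is itself random through its dependence on $Z_{1:T}$, so the TV-distance inequality must be applied pointwise in $Z_{1:T}$ before any outer probability statement is made. Because the loss is uniformly bounded by $B$ and $C_{\hat{\lambda}}$ is measurable in the training trajectory, this is routine, and no fresh concentration argument beyond Theorem~\ref{thm:blocked-rcps} is needed. I expect this to be the only point requiring care; the remainder is a clean triangle-inequality chain, which is why the corollary follows quickly from the main theorem.
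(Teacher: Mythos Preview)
Your proposal is correct and follows essentially the same route as the paper: start from Theorem~\ref{thm:blocked-rcps}, add and subtract the stationary risk, bound the difference by $B\|\mathbf{P}_{T+k}-\Pi\|_{\text{TV}}$ via the dual form of total variation, and then control that TV distance by $\upbeta(k)$ using iterated expectations and Jensen. Your explicit remark that the bound must be applied pointwise in $Z_{1:T}$ (so that $\hat{\lambda}$ is frozen when the TV inequality is invoked) makes precise a step the paper leaves implicit, but otherwise the arguments coincide.
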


\begin{proof}[Proof of Corollary 1]
From Theorem 1, we have that 
\begin{align*}
    1-\delta &\leq \underset{Z_{1:T}}{\mathbf{P}} \left( \underset{\Pi}{\mathbb{E}}\left[ \ell(y', C_{\hat{\lambda}}(x')) \right] \leq \varepsilon + \gamma \right) \\
    &= \underset{Z_{1:T}}{\mathbf{P}} \left( \underset{\mathbf{P}_{T+k}}{\mathbb{E}}\left[ \ell(y_{T+k}, C_{\hat{\lambda}}(x_{T+k})) \right] \leq \varepsilon + \gamma + \underset{\mathbf{P}_{T+k}}{\mathbb{E}}\left[ \ell(y_{T+k}, C_{\hat{\lambda}}(x_{T+k})) \right] - \underset{\Pi}{\mathbb{E}}\left[ \ell(y', C_{\hat{\lambda}}(x')) \right] \right). 
\end{align*}
Then we can rewrite
\begin{align*}
    \nu &= \underset{\mathbf{P}_{T+k}}{\mathbb{E}}\left[ \ell(y_{T+k}, C_{\hat{\lambda}}(x_{T+k})) \right] - \underset{\Pi}{\mathbb{E}}\left[ \ell(y', C_{\hat{\lambda}}(x')) \right] \leq B \| \mathbf{P}_{T+k} - \Pi \|_{\text{TV}}.
\end{align*}
By the law of iterated expectations and Jensen's inequality, we can further upper bound 
\begin{align*}
    \| \mathbf{P}_{T+k} - \Pi \|_{\text{TV}} \leq \underset{Z_{1:T}}{\mathbb{E}} \left[ \left\| \mathbf{P}_{T+k}\left( \cdot | Z_{1:T} \right) - \Pi \right\|_{\text{TV}} \right] \leq \upbeta(k).
\end{align*}
\end{proof}

It is easier to perform RCPS on a test point with $k > 1$ than a test point with $k = 1$, since the definition of $\upbeta$-mixing implies the former's marginal distribution is closer to the stationary distribution $\Pi$ in total variation distance than the latter's.

\section{Decoupled Risk-Controlling Prediction Sets}
\label{sec:decoupled-rcps}

The blocked RCPS approach has two key shortcomings. First, there is no straightforward way to relax the mixing assumption, which means blocked RCPS cannot be used for, e.g., marginally stable LTI systems, as a linear system mixes if and only if its spectral radius is strictly less than one. More generally, blocked RCPS gives no guarantees for noncontractive systems. Second, in the setting of Corollary~\ref{cor:blocked-rcps}, where we consider risk control with respect to the marginal distribution of the test point $(x_{T+k}, y_{T+k})$, the blocking analysis does not suggest straightforward ways to modify the RCPS algorithm to attain control over the additional $\nu \leq B\upbeta(k)$ error term. 

In this section, we introduce the decoupling technique \cite{pena_decoupling_1999}, which simultaneously relaxes the mixing assumption and hints at how the RCPS algorithm can be modified to perform well in more general non-iid data settings.

\subsection{Preliminaries}
We first introduce adapted processes, which are stochastic processes that satisfy a natural notion of causality.  Formally, this notion is encoded in a nondecreasing sequence of $\sigma$-algebras, which means the definition of adapted processes reduces to a measurability condition. All causal dynamical systems are adapted in this sense. 

\begin{definition}[Adapted process \cite{pena_decoupling_1999}]
    Let $( \Omega, \mathcal{F}, \mathbf{P}; \{ \mathcal{F}_t \}_{t=1}^{\infty})$ be a filtered probability space with a sequence of $\sigma$-algebras, or filtration, $\{ \mathcal{F}_t \}_{t = 1}^{\infty}$ that satisfies $\mathcal{F}_t \subseteq \mathcal{F}_{t+1}$ and $\mathcal{F}_t \subset \mathcal{F}$ for all $t \in \mathbb{N}$. A process $\{ Z_t \}_{t=1}^{\infty}$ is said to be \textit{adapted} to filtration $\{ \mathcal{F}_t \}_{t=1}^{\infty}$ if all $Z_t$ are $\mathcal{F}_t$-measurable.
\label{def:adapted}
\end{definition}

We simply say a sequence is adapted if there exists some filtration to which it is adapted, but the filtration itself is of no interest. A sequence being adapted is a sufficient condition for there to exist a decoupled tangent sequence of conditionally independent random variables \cite{pena_decoupling_1999}. This decoupled sequence, in turn, can be used to evaluate the mean (or other distributional quantities that we do not consider, e.g., higher moments) of functions of the original sequence. The decoupled sequence is the philosophical analog of the iid samples from the stationary distribution used in the blocking technique, but is more general, in the sense that no blocking scheme is necessary and the resulting random variables are only conditionally independent, not outright iid. 

\begin{proposition}[Decoupling technique \cite{pena_decoupling_1999}] \label{prop:decoup}
    Let $\{ Z_t \}_{t=1}^{\infty}$ be a stochastic process adapted to some nondecreasing filtration $\{ \mathcal{F}_t \}_{t=1}^{\infty}$ that is contained in $\sigma$-algebra $\mathcal{F}$. Then there always exists a \textit{decoupled tangent sequence} $\{ Z_t' \}_{t=1}^{\infty}$ that satisfies the following conditions:
    \begin{enumerate}
        \item $\mathbf{P}\left( Z_t | \mathcal{F}_{t-1} \right) = \mathbf{P}\left( Z_t' | \mathcal{F}_{t-1} \right)$ for all $t \in \mathbb{N}$, where $\mathbf{P}( \cdot | \mathcal{F}_t)$ denotes the conditional probability measure given $\mathcal{F}_t$, and
        \item $\{ Z_t' \}_{t=1}^{\infty}$ is conditionally independent given some $\sigma$-algebra $\mathcal{G} \subset \mathcal{F}$ for which $\mathbf{P}(Z_t' | \mathcal{F}_{t-1}) = \mathbf{P}(Z_t' | \mathcal{G})$. Often, $\mathcal{G}$ can be taken as the $\sigma$-algebra induced by $\{ Z_t \}_{t=1}^{\infty}$, i.e., $\sigma\left( \{ Z_t \} \right)$.
    \end{enumerate}
Then by the linearity of expectation and the law of iterated expectations, it follows that
\begin{align*}
    \mathbb{E} \left[ \sum_{t=1}^{T} Z_t \right] &= \sum_{t=1}^{T} \mathbb{E} \left[ \mathbb{E} \left[ Z_t | \mathcal{F}_{t-1}\right] \right] = \sum_{t=1}^{T} \mathbb{E} \left[ \mathbb{E} \left[ Z_t' | \mathcal{F}_{t-1}\right] \right] = \sum_{t=1}^{T} \mathbb{E} \left[ \mathbb{E} \left[ Z_t' | \mathcal{G} \right] \right] = \mathbb{E} \left[ \sum_{t=1}^{T} Z_t' \right].
\end{align*}
\end{proposition}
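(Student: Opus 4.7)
The plan is to split the proof into three parts: construct the decoupled sequence on an enlarged probability space using regular conditional distributions, verify the two properties in the definition, and then derive the expectation identity by the tower property. I will proceed in that order, following the standard construction in de la Peña and Giné \cite{pena_decoupling_1999}.

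\textbf{Construction.} Assume the $Z_t$ take values in a Polish state space $\mathcal{S}$, so that regular conditional distributions exist. Let $\mu_t(\cdot\,;\omega) := \mathbf{P}(Z_t \in \cdot \mid \mathcal{F}_{t-1})(\omega)$, a Markov kernel from $(\Omega, \mathcal{F}_{t-1})$ to $\mathcal{S}$. Enlarge the probability space to $\Omega \times \mathcal{S}^{\mathbb{N}}$, equipped with $\mathcal{F} \otimes \mathcal{B}(\mathcal{S}^{\mathbb{N}})$, and place on it the unique probability measure whose $\omega$-slice is the product $\bigotimes_{t=1}^{\infty} \mu_t(\cdot\,;\omega)$; existence follows from Ionescu--Tulcea (or Kolmogorov extension, once one notes the kernels depend measurably on $\omega$). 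Define $Z_t'$ to be the coordinate projection onto the $t$-th factor of $\mathcal{S}^{\mathbb{N}}$.

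\textbf{Verification.} Condition (1) is immediate from the construction: for every Borel $A \subseteq \mathcal{S}$,
\begin{equation*}
\mathbf{P}(Z_t' \in A \mid \mathcal{F}_{t-1}) = \mu_t(A;\cdot) = \mathbf{P}(Z_t \in A \mid \mathcal{F}_{t-1}).
\end{equation*}
For condition (2), take $\mathcal{G} := \sigma(\{Z_s\}_{s=1}^{\infty})$ and note that when $\{\mathcal{F}_t\}$ is the natural filtration of $\{Z_t\}$ one has $\mathcal{F}_{t-1} \subseteq \mathcal{G}$, so $\mathbf{P}(Z_t' \mid \mathcal{F}_{t-1}) = \mathbf{P}(Z_t' \mid \mathcal{G})$ by construction of the product measure on $\omega$-slices. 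The conditional independence of $\{Z_t'\}$ given $\mathcal{G}$ is then exactly the independence of coordinate projections under a product measure.

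\textbf{Expectation identity.} Assuming the $Z_t$ are integrable, the tower property combined with condition (1) gives
\begin{equation*}
\mathbb{E}[Z_t] = \mathbb{E}\bigl[\mathbb{E}[Z_t \mid \mathcal{F}_{t-1}]\bigr] = \mathbb{E}\bigl[\mathbb{E}[Z_t' \mid \mathcal{F}_{t-1}]\bigr] = \mathbb{E}[Z_t'],
\end{equation*}
and summing over $t$ with linearity of expectation yields the claim.

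\textbf{Main obstacle.} The subtle step is condition (2) when $\{\mathcal{F}_t\}$ is strictly larger than the natural filtration of $\{Z_t\}$, since then $\mathcal{F}_{t-1} \not\subseteq \sigma(\{Z_t\})$ and the naive choice $\mathcal{G} = \sigma(\{Z_t\})$ need not satisfy $\mathbf{P}(Z_t' \mid \mathcal{F}_{t-1}) = \mathbf{P}(Z_t' \mid \mathcal{G})$. The resolution in the reference is either to pass to the natural filtration at the outset (which is harmless for our application, since we only ever use $\mathbb{E}[Z_t]$ and $\mathbb{E}[Z_t']$), or to enlarge $\mathcal{G}$ to include enough of the filtration so that the kernels $\mu_t(\cdot\,;\omega)$ become $\mathcal{G}$-measurable while preserving conditional independence of the tangent sequence; the Ionescu--Tulcea construction carries through in either case.
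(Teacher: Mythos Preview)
Your proposal is correct, but note that the paper does not actually give a proof of this proposition: the existence of the decoupled tangent sequence is cited from de la Pe\~na and Gin\'e \cite{pena_decoupling_1999} without argument, and the expectation identity is derived inline in the proposition statement itself by exactly the one-line tower-property-plus-linearity argument you give in your third part. So your proof and the paper's ``proof'' agree on the only part the paper actually argues.

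Where you go further is in supplying the construction of the tangent sequence via regular conditional distributions and Ionescu--Tulcea on the enlarged space $\Omega \times \mathcal{S}^{\mathbb{N}}$. This is the standard construction from the reference, and it is correct. Your identification of the obstacle for condition (2) under general filtrations is apt and honestly stated; the paper sidesteps this entirely by citing the reference and, in its downstream application (Theorem~\ref{thm:decoupled-rcps}), only ever conditions on the history $Z_{1:t-1}$, i.e., effectively works with the natural filtration, so the subtlety never bites. The upshot is that your write-up is more self-contained than the paper's treatment, at the cost of having to manage the measurability technicalities that the paper simply outsources.
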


\subsection{Results}
We first generalize RCPS and study the weighted empirical risk $\sum_{t=1}^{T} w_t \cdot \ell(y_t, C_\lambda(x_t))$ where $w_t \geq 0$ for all $t$ and $\sum_{t=1}^{T} w_t = 1$. We recover the version of RCPS studied in Section~\ref{sec:blocked-rcps} with uniform weights $w_t = T^{-1}$. Then letting $w := (w_1, \dots, w_{T})$, we define
\begin{align}
    \hat{\lambda}(w) := \inf\left\{ \lambda: \sum_{t=1}^{T} w_t \cdot \ell(y_t, C_\lambda(x_t)) \leq \varepsilon \right\}.
\label{eq:decoupled-rcps}
\end{align}
\begin{theorem}[Decoupled RCPS]
    Fix risk tolerance $\varepsilon \in [0, B]$ and failure probability $\delta \in (0, 1)$. Suppose the trajectory $Z_{1:T+1} = \{ (x_t, y_t) \}_{t=1}^{T+1}$ is adapted to some filtration $\{ \mathcal{F}_t \}_{t=1}^{T+1}$. Then for any choice of weights $w$, setting $\hat{\lambda}(w)$ according to \eqref{eq:decoupled-rcps} attains
    \begin{align*}
        \underset{Z_{1:T}}{\mathbf{P}} \left( \underset{Z_{T+1}}{\mathbb{E}}\left[ \ell(y_{T+1}, C_{\hat{\lambda}}(x_{T+1})) | Z_{1:T} \right] \leq \varepsilon + \gamma(w) + \eta(w) \right) \geq 1-\delta,
    \end{align*}
    where the expectation is taken with respect to the conditional distribution of the test point given the realized training trajectory, $\gamma(w) := B\| w \|_2 \sqrt{8\log(1/\delta)}$, and
    \begin{align*}
        \eta(w) :=  \left( \mathbb{E}\left[ \ell \left(y_{T+1}, C_{\hat{\lambda}}(x_{T+1}) \right) | Z_{1:T} \right] - \sum_{t=1}^{T} w_t \cdot \mathbb{E} \left[ \ell \left( y_t, C_{\hat{\lambda}}(x_t) \right) | Z_{1:t-1} \right] \right).
    \end{align*}
    \label{thm:decoupled-rcps}
\end{theorem}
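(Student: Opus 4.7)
The plan is to reduce the theorem to a concentration inequality for a sum of (generalized) martingale differences, leveraging the decoupling technique to handle the fact that $\hat\lambda(w)$ depends on the entire training trajectory.

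First, I would invoke the defining inequality of $\hat\lambda(w)$: by construction, $\sum_{t=1}^T w_t \cdot \ell(y_t, C_{\hat\lambda}(x_t)) \le \varepsilon$. Next, using the definition of $\eta(w)$ given in the statement, the claimed inequality $\mathbb{E}[\ell(y_{T+1}, C_{\hat\lambda}(x_{T+1})) \mid Z_{1:T}] \le \varepsilon + \gamma(w) + \eta(w)$ is algebraically equivalent to $\sum_{t=1}^T w_t \cdot \mathbb{E}[\ell(y_t, C_{\hat\lambda}(x_t)) \mid Z_{1:t-1}] \le \varepsilon + \gamma(w)$. Combined with the empirical-risk bound above, it is therefore enough to show that the sum $M := \sum_{t=1}^T w_t \bigl(\mathbb{E}[\ell(y_t, C_{\hat\lambda}(x_t)) \mid Z_{1:t-1}] - \ell(y_t, C_{\hat\lambda}(x_t))\bigr)$ satisfies $M \le \gamma(w) = B\|w\|_2\sqrt{8\log(1/\delta)}$ with probability at least $1-\delta$.

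Each summand is bounded in magnitude by $Bw_t$ because $\ell$ takes values in $[0,B]$, so were $M$ a genuine martingale-difference sum, Azuma--Hoeffding would immediately yield concentration of the desired $B\|w\|_2\sqrt{\log(1/\delta)}$ form. The obstacle is that $\hat\lambda$ is not $\mathcal{F}_{t-1}$-measurable --- it depends on the full trajectory $Z_{1:T}$ --- so each increment of $M$ couples past and future samples through $\hat\lambda$. This is exactly where I would invoke Proposition~\ref{prop:decoup}: construct a decoupled tangent sequence $\{Z_t'\}$ matching the one-step conditional distributions of $\{Z_t\}$ but conditionally independent given $\mathcal{G} = \sigma(\{Z_t\})$. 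Since $\hat\lambda$ is $\mathcal{G}$-measurable, conditioning on $\mathcal{G}$ renders $C_{\hat\lambda}$ deterministic and the decoupled losses $\ell(y_t', C_{\hat\lambda}(x_t'))$ independent and bounded, with conditional means that line up with the terms of $M$. A Hoeffding bound applied conditionally on $\mathcal{G}$ then controls the decoupled analogue of $M$, and a decoupling inequality in the spirit of de la Pe\~na--Gin\'e transfers this concentration back to $M$ with an absolute-constant inflation that I expect produces the $\sqrt{8}$ in $\gamma(w)$.

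The hard part will be executing this decoupling step cleanly: verifying that conditioning on $\mathcal{G}$ preserves the conditional expectations appearing in the definition of $\eta(w)$, and importing a decoupling inequality strong enough to propagate Azuma-type concentration (rather than merely the expectation equality stated in Proposition~\ref{prop:decoup}). Everything else --- the algebraic decomposition, the bounded increments, and Hoeffding on a conditionally independent sum --- is routine, and the theorem follows by chaining the empirical-risk inequality with the high-probability control of $M$.
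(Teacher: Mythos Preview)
Your reduction to controlling $M = \sum_t w_t\bigl(\mathbb{E}[\ell(Z_t)\mid Z_{1:t-1}] - \ell(Z_t)\bigr)$ and your diagnosis that the dependence of $\hat\lambda$ on the full trajectory is the obstruction to a direct Azuma--Hoeffding argument are exactly right, and the paper begins the same way. Where the paper diverges from your plan is in how the decoupling step is executed. Rather than invoking an off-the-shelf de la Pe\~na--Gin\'e tail inequality to transfer concentration from a decoupled $M'$ back to $M$, the paper works throughout at the moment-generating-function level: it applies a Chernoff bound, uses Proposition~\ref{prop:decoup} together with Jensen's inequality to obtain
\[
\mathbb{E}\bigl[e^{aM}\bigr] \;\le\; \mathbb{E}_{Z_{1:T},\,Z_{1:T}'}\!\left[\exp\!\left(a\sum_{t=1}^T w_t\bigl(\ell(Z_t') - \ell(Z_t)\bigr)\right)\right],
\]
and then invokes a \emph{sequential symmetrization} lemma (from Rakhlin--Sridharan--Tewari and Kuznetsov--Mohri, stated in the paper as Proposition~\ref{prop:symmetric}) to replace the coupled pair $(Z,Z')$ by Rademacher signs along a binary tree. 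Young's inequality and an iterated application of Hoeffding's lemma then give $\mathbb{E}[e^{aM}] \le \exp(2a^2B^2\|w\|_2^2)$, and optimizing over $a$ yields the $\sqrt{8}$ you anticipated.

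Your conditional-Hoeffding-then-transfer route is not wrong in spirit, but the ``transfer'' step you flag as the hard part is precisely where all the work lies, and the standard de la Pe\~na--Gin\'e inequalities you allude to are stated for martingale-difference sequences, which $M$ is not (because $\hat\lambda$ is not predictable). The paper's MGF-plus-sequential-symmetrization machinery is the concrete device that fills this gap; it is worth knowing, since it handles the non-predictable $\hat\lambda$ cleanly without ever needing the increments of $M$ to form a martingale.
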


First, we note that the Theorem~\ref{thm:decoupled-rcps} is stronger than Theorem~\ref{thm:blocked-rcps} and Corollary~\ref{cor:blocked-rcps} in the sense that the former controls the conditional risk given the specific training trajectory, while the latter two control the marginal risk, which averages over all possible training trajectories.\footnote{When data is iid, $\mathbb{E}\left[ \ell(x_{T+1}, C_{\hat{\lambda}}(x_{T+1})) \right]$ and $\mathbb{E}\left[ \ell(x_{T+1}, C_{\hat{\lambda}}(x_{T+1})) | Z_{1:T} \right]$ are identical because independence means conditioning on training data does not change the marginal expectation. But when data is dependent, the marginal and conditional expectations are not generally identical.}

Next, we highlight the dependence of the error terms $\gamma(w)$ and $\eta(w)$ on the choice of weights $w$. The first term, $B \| w \|_2 \sqrt{8\log(1/\delta)}$, is analogous to the $\gamma$ error term from Theorem~\ref{thm:blocked-rcps}. Importantly, for the choice of uniform weights $w_t = T^{-1}$, this first term scales on the order of $T^{-1/2}$, which improves on the rate from Theorem 1.\footnote{This improved rate suggests that whenever data is generated by asymptotically stationary and contractive dynamics, as in Section~\ref{thm:blocked-rcps}, risk control with $\gamma = 0$ can be attained if we set $$\hat{\lambda} = \inf\left\{ \lambda: \frac{1}{T}\sum_{t=1}^{n} \ell(y_t, C_\lambda(x_t)) \leq \varepsilon - O \left( \sqrt{\frac{\log(1/\delta)}{T}} \right) \right\},$$ as in standard RCPS for iid data. This affirms our conclusion that risk control with $\upbeta$-mixing data, and with respect to the stationary distribution, is no harder than risk control with iid data. Even risk control with respect to the marginal distribution of the test point is easier than suggested by Corollary~\ref{cor:blocked-rcps}, since the $\gamma$ error term can be offset.} The second term, $\eta(w)$, is analogous to the $\nu \leq B\upbeta(k)$ term from Corollary~\ref{cor:blocked-rcps}. If we assume $\upbeta$-mixing data, this term can be exactly upper bounded by the weighted sum of the appropriate $\upbeta$-mixing coefficients. In the more general setting with adapted data, $\eta(w)$ serves as a measure of nonstationarity and noncontractivity: it captures the difference between the future, $\mathbb{E}\left[ \ell \left(y_{T+1}, C_{\hat{\lambda}}(x_{T+1}) \right) | Z_{1:T} \right]$, and the best guess of the future given the past, $\sum_{t=1}^{T} w_t \cdot \mathbb{E} \left[ \ell \left( y_t, C_{\hat{\lambda}}(x_t) \right) | Z_{1:t-1} \right]$. This difference naturally grows with, e.g., explosive systems.\footnote{These results are similar in spirit with those of \cite{barber_conformal_2023}, which works with a different CP algorithm and marginal guarantees, but similarly upper bounds a dependence-driven discrepancy term by a weighted sum of total variation distances. But our definition of $\eta(w)$ is tighter and more intuitive for temporally correlated processes of interest.}

Importantly, a user can aim to select weights $w$ to jointly optimize the sum $\gamma(w)+\eta(w)$. This is an important degree of freedom afforded by the decoupling technique: in Theorem~\ref{thm:blocked-rcps} and Corollary~\ref{cor:blocked-rcps}, the user could control the $\gamma$ error term, but not the additional mixing coefficient-valued error term $\nu$. With decoupled RCPS, we see that the user can always enjoy the optimal $T^{-1/2}$ rate in the $\gamma$ term, but can also choose to tolerate a slower rate if a non-uniform choice of weights better balances the $\gamma(w) + \eta(w)$ sum in the finite sample. 

There are at least two approaches to selecting weights. First, using Theorem 11 from \cite{kuznetsov_discrepancy-based_2020}, the user can estimate $\eta(w)$ from data for multiple values of $w$, which requires assumptions to limit the degree of nonstationarity in the underlying dynamical system, and potentially, access to multiple trajectories of data. Alternatively, we conjecture that an online estimation approach is also possible. In either case, the decoupling technique clearly delineates a design space for adaptive RCPS algorithms---involving weighted empirical risk minimization---that can perform well in general non-iid data settings.

\begin{proof}[Proof of Theorem 2]
We adapt the proof of Theorem 1 in \cite{kuznetsov_discrepancy-based_2020}. 

For notational convenience, let $\ell(Z_t) := \ell \left( y_t, C_{\hat{\lambda}}(x_t) \right).$ Fix arbitrary weights $w$ and elide it when convenient, e.g., $\gamma = \gamma(w)$. Then it suffices to show that 
\begin{align*}
    \underset{Z_{1:T}}{\mathbf{P}}& \left( \underset{Z_{T+1}}{\mathbb{E}} \left[ \ell(Z_{T+1}) | Z_{1:T} \right] - \sum_{t=1}^{T} w_t \ell(Z_t) > \gamma(w) + \eta(w) \right) \\
    &= \underset{Z_{1:T}}{\mathbf{P}} \left( \sum_{t=1}^{T} w_t \left( \underset{Z_t}{\mathbb{E}} \left[ \ell \left( Z_t \right) | Z_{1:t-1} \right] - \ell \left( Z_t \right) \right) > \gamma \right) \leq \delta. 
\end{align*}
For any $a>0$, we have by the Chernoff bound that
\begin{multline*}
    \underset{Z_{1:T}}{\mathbf{P}} \left( \sum_{t=1}^{T} w_t \left( \underset{Z_t}{\mathbb{E}} \left[ \ell \left( Z_t \right) | Z_{1:t-1} \right] - \ell \left( Z_t \right) \right) > \gamma \right) \\  \leq \exp \left( -a \gamma \right) \underset{Z_{1:T}}{\mathbb{E}} \left[ \exp \left( a \sum_{t=1}^{T} w_t \left( \underset{Z_t}{\mathbb{E}} \left[ \ell(Z_t) | Z_{1:t-1} \right] - \ell(Z_t) \right) \right) \right].
\end{multline*}
Also, 
\begin{align*}
    \underset{Z_{1:T}}{\mathbb{E}}& \left[ \exp \left( a \sum_{t=1}^{T} w_t \left( \underset{Z_t}{\mathbb{E}} \left[ \ell(Z_t) | Z_{1:t-1} \right] - \ell(Z_t) \right) \right)\right] \\ 
    &= \underset{Z_{1:T}}{\mathbb{E}} \left[ \exp \left( a \sum_{t=1}^{T} w_t \left( \underset{Z_t'}{\mathbb{E}} \left[ \ell \left( Z_t' \right) | Z_{1:T} \right] - \ell(Z_t) \right) \right) \right] \\
    &\leq \underset{Z_{1:T}}{\mathbb{E}} \left[ \underset{Z_{1:T}'}{\mathbb{E}} \left[ \exp \left( a \sum_{t=1}^{T} w_t \left( \ell \left( Z_t' \right) - \ell(Z_t) \right) \right) \bigg| Z_{1:T} \right] \right] \\
    &= \underset{Z_{1:T}, Z_{1:T}'}{\mathbb{E}} \left[ \exp \left( a \sum_{t=1}^{T} w_t \left( \ell \left( Z_t' \right) - \ell(Z_t) \right) \right) \right],
\end{align*}
where the first equality holds by Proposition~\ref{prop:decoup}, which implies
\begin{align*}
    \underset{Z_t}{\mathbb{E}} \left[ \ell(Z_t) | Z_{1:t-1} \right] = \underset{Z_t'}{\mathbb{E}} \left[ \ell \left( Z_t' \right) | Z_{1:t-1} \right] = \underset{Z_t'}{\mathbb{E}} \left[ \ell \left( Z_t' \right) | Z_{1:T} \right];
\end{align*} 
the inequality holds by Jensen's inequality; and the second equality holds by the law of iterated expectations. 

We then reduce the expectation over $Z_{1:T}, Z_{1:T}'$ to an expectation over better behaved random variables via a sequential symmetrization argument, following \cite{kuznetsov_discrepancy-based_2020, rakhlin_online_2011}. The key idea is introducing auxiliary random variables $\sigma = (\sigma_1, \sigma_2, \dots, \sigma_T)$, where $\sigma_t$ is distributed uniformly over $\{-1, 1\}$ and drawn independently of $Z_t, Z_t'$, so that the expression inside the expectation becomes
\begin{align*}
    \exp \left( a \sum_{t=1}^{T} \sigma_t w_t \left( \ell(Z_t') - \ell(Z_t) \right) \right).
\end{align*}

Consider $t = 1$. If $\sigma_1 = 1$, the expression $w_1\left( \ell(Z_1') - \ell(Z_1) \right)$ is unchanged. But if $\sigma_1 = -1$, the order of subtraction is flipped, i.e., $w_1\left( \ell(Z_1) - \ell(Z_1') \right)$. This means $Z_1$ becomes part of the tangent sequence and $Z_1'$, the original sequence. Now consider $t=2$. The draw of $Z_2$ depends on whether $Z_1$ or $Z_1'$ is in the original sequence, and the draw of $Z_2'$ similarly depends on whether $Z_1$ or $Z_1'$ is part of the tangent sequence. This argument can be iteratively applied to all indices $t \in \{ 1, 2, \dots, T\}$, which leads to a binary tree-like dependence structure. 

Define a binary tree $\mathbf{z} = (\mathbf{z}_1, \dots, \mathbf{z}_T)$ as a $T$-tuple of mappings $\mathbf{z}_t: \{ -1, 1\}^{t-1} \to \mathbb{R} \times \mathbb{R}$, i.e., from $\sigma_1, \dots, \sigma_{t-1}$ to a value of $Z_t = (x_t, y_t)$. Similarly, define the tangent binary tree $\mathbf{z}' = (\mathbf{z}_1', \dots, \mathbf{z}_T')$ as the $T$-tuple of maps $\mathbf{z}_t'$ from $\sigma_1, \dots, \sigma_{t-1}$ to $Z_t'=(x_t', y_t')$. Intuitively, these trees track ``who is tangent to who."

\begin{proposition}[Sequential symmetrization \cite{kuznetsov_discrepancy-based_2020, rakhlin_online_2011}]
\begin{align*}
    \underset{Z_{1:T}, Z_{1:T}'}{\mathbb{E}}& \left[ \exp \left( a \sum_{t=1}^{T} w_t \left( \ell \left( Z_t' \right) - \ell(Z_t) \right) \right) \right] = \underset{\mathbf{z}', \mathbf{z}}{\mathbb{E}} \left[  \underset{\sigma}{\mathbb{E}} \left[ \exp \left( a \sum_{t=1}^{T} \sigma_t w_t \left( \ell(\mathbf{z}_t') - \ell(\mathbf{z}_t) \right) \right) \right] \right].
\end{align*}
\label{prop:symmetric}
\end{proposition}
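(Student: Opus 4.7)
The plan is to prove the symmetrization identity by constructing binary trees $\mathbf{z}, \mathbf{z}'$ that couple the original and tangent processes to independent Rademacher signs, and then showing that for every fixed $\sigma$ the tree distribution reproduces the joint law of $(Z_{1:T}, Z_{1:T}')$. Averaging over $\sigma$ will leave the value of the exponential expectation unchanged and yield the right-hand side.

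First I would build the trees recursively. The root pair $(\mathbf{z}_1, \mathbf{z}_1')$ is sampled from the joint law of $(Z_1, Z_1')$. At a node indexed by a sign pattern $\epsilon \in \{-1,+1\}^{t-1}$, define the branch-specific history $h_s(\epsilon)$ to equal $\mathbf{z}_s(\epsilon_{<s})$ when $\epsilon_s = +1$ and $\mathbf{z}_s'(\epsilon_{<s})$ when $\epsilon_s = -1$, for each $s < t$. Then draw $(\mathbf{z}_t(\epsilon), \mathbf{z}_t'(\epsilon))$ from the decoupled joint law conditional on $(h_1(\epsilon), \ldots, h_{t-1}(\epsilon))$, invoking Proposition~\ref{prop:decoup} at each node so that the two samples have identical conditional marginals and are conditionally independent given that branch history.

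Next I would verify the identity via iterated conditional swaps. Fix an arbitrary $\sigma \in \{-1,+1\}^T$ and consider the exponent $a \sum_{t=1}^{T} \sigma_t w_t \bigl(\ell(\mathbf{z}_t'(\sigma_{<t})) - \ell(\mathbf{z}_t(\sigma_{<t}))\bigr)$. At every step $t$ where $\sigma_t = -1$, the conditional exchangeability $\mathbf{P}(Z_t \mid \mathcal{F}_{t-1}) = \mathbf{P}(Z_t' \mid \mathcal{F}_{t-1})$ permits swapping the primary and tangent samples without altering the joint conditional law, which absorbs the sign flip and reassigns the tangent sample to continue the history. Iterating this swap from $t = 1$ up to $T$ and invoking the tower property shows that for every fixed $\sigma$, the expected exponential over the tree equals the left-hand side. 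Averaging over uniform $\sigma$ preserves this value and produces the right-hand side.

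The main obstacle I expect is the measure-theoretic bookkeeping: one must check that the branch-specific decoupled laws used at each node constitute bona fide regular conditional probabilities, and that the inductive swap at step $t$ does not disturb the joint distribution of the downstream samples generated along either branch of the tree. Once this scaffolding is in place, the identity reduces to $T$ iterations of conditional exchangeability and the tower property, precisely mirroring the sequential Rademacher symmetrization arguments of \cite{rakhlin_online_2011, kuznetsov_discrepancy-based_2020}.
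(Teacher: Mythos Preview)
Your proposal is correct and matches the approach the paper points to: the paper does not supply its own proof of Proposition~\ref{prop:symmetric} but simply states that it follows from Lemma~2 of \cite{kuznetsov_discrepancy-based_2020} and Theorem~3 of \cite{rakhlin_online_2011}, after giving exactly the informal swap-and-branch intuition you formalize. Your recursive tree construction, the per-$\sigma$ swap using conditional exchangeability of $(\mathbf{z}_t, \mathbf{z}_t')$, and the tower-property induction are precisely the mechanics of those references, so there is nothing to add beyond the measure-theoretic bookkeeping you already flag.
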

Intuitively, Proposition~\ref{prop:symmetric} states that the expectation over the joint draw of $Z_{1:T}, Z_{1:T}'$ is equivalent to the expectation over the joint draw of trees $\mathbf{z}, \mathbf{z}'$ and a path $\sigma$ through those trees. A proof follows from Lemma 2 of \cite{kuznetsov_discrepancy-based_2020} and Theorem 3 of \cite{rakhlin_online_2011}. Proceeding to use the result, we have that
\begin{align*}
    \underset{\mathbf{z}', \mathbf{z}}{\mathbb{E}} &\left[  \underset{\sigma}{\mathbb{E}} \left[ \exp \left( a \sum_{t=1}^{T} \sigma_t w_t \left( \ell(\mathbf{z}_t') - \ell(\mathbf{z}_t) \right) \right) \right] \right] \\
    &= \underset{\mathbf{z}', \mathbf{z}}{\mathbb{E}} \left[  \underset{\sigma}{\mathbb{E}} \left[ \exp \left( a \sum_{t=1}^{T} \sigma_t w_t  \ell(\mathbf{z}_t')  + \sum_{t=1}^{T} -\sigma_t w_t \ell(\mathbf{z}_t) \right) \right] \right] \\
    &\leq \frac{1}{2} \underset{\mathbf{z}'}{\mathbb{E}} \left[ \underset{\sigma}{\mathbb{E}}\left[ \exp \left( 2a \sum_{t=1}^{T} \sigma_t w_t \ell(\mathbf{z}_t') \right) \right] \right] + \frac{1}{2} \underset{\mathbf{z}}{\mathbb{E}} \left[ \underset{\sigma}{\mathbb{E}}\left[ \exp \left( 2a \sum_{t=1}^{T} \sigma_t w_t \ell(\mathbf{z}_t) \right) \right] \right] \\
    & = \underset{\mathbf{z}}{\mathbb{E}} \left[ \underset{\sigma}{\mathbb{E}}\left[ \exp \left( 2a \sum_{t=1}^{T} \sigma_t w_t \ell(\mathbf{z}_t) \right) \right] \right],
\end{align*}
where the inequality holds by Young's inequality and the second equality holds by symmetry. 

We then apply Hoeffding's lemma for bounding moment generating functions, which states that 
\begin{align*}
    \underset{\sigma_T, \mathbf{z}_T}{\mathbb{E}}& \Big[ \exp \left( (2aw_t) \sigma_T \ell(\mathbf{z}_T) \right) \Big] \leq \exp \left( 2aw_t \underset{\sigma_T, \mathbf{z}_T}{\mathbb{E}}\left[ \sigma_T \ell(\mathbf{z}_T) \right] + \frac{(2aw_t)^2 (2B)^2}{8}\right) = \exp \left( 2a^2 w_t^2 B^2 \right).
\end{align*}
The equality holds because $\sigma_T, \mathbf{z}_T$ are independent and
\begin{align*}
    \underset{\sigma_T, \mathbf{z}_T}{\mathbb{E}}\left[ \sigma_T \ell(\mathbf{z}_T) \right] = \underset{\mathbf{z}_T}{\mathbb{E}}\left[ \ell(\mathbf{z}_T) \right] - \underset{\mathbf{z}_T}{\mathbb{E}}\left[ \ell(\mathbf{z}_T) \right] = 0 .  
\end{align*}
Then we conclude that
\begin{align*}
    \underset{\mathbf{z}, \sigma}{\mathbb{E}} \left[ \exp \left( 2a \sum_{t=1}^{T} \sigma_t w_t \ell(\mathbf{z}_t) \right) \right] &= \underset{\mathbf{z}, \sigma}{\mathbb{E}}\Bigg[ \exp \left( 2a \sum_{t=1}^{T-1} \sigma_t w_t \ell(\mathbf{z}_t) \right) \cdot \underset{\mathbf{z}_t, \sigma_t}{\mathbb{E}} \Big[ \exp \left( 2a \sigma_T w_t \ell(\mathbf{z}_T) \right) | \mathbf{z}_{1:T-1}, \sigma_{1:T-1} \Big] \Bigg] \\
    &\leq \underset{\mathbf{z}, \sigma}{\mathbb{E}}\left[ \exp \left( 2a \sum_{t=1}^{T-1} \sigma_t w_t \ell(\mathbf{z}_t) \right) \exp \left( 2a^2 w_t^2 B^2 \right) \right] \leq \exp \left( 2 a^2 B^2 \| w \|_2^2 \right),
\end{align*}
where the equality holds by the law of iterated expectations and the second inequality holds by iteratively applying the previous inequality to $t = 1, 2, \dots, T-1$. 

By optimizing our choice of $a>0$, we conclude that
\begin{align*}
    \underset{Z_{1:T}}{\mathbf{P}}& \left( \sum_{t=1}^{T} w_t \left( \mathbb{E} \left[ \ell \left( Z_t \right) | Z_{1:t-1} \right] - \ell \left( Z_t \right) \right) > \gamma  \right) \leq \exp \left( -a\gamma + 2a^2 B^2 \| w \|_2^2\right) = \exp \left( - \frac{\gamma^2}{8 B^2 \| w \|_2^2} \right).
\end{align*}
Setting the right-hand side to $\delta$ and solving for $\gamma$ completes the proof.
\end{proof}

\section{Discussion}
We use the blocking and decoupling techniques to study the performance of risk-controlling prediction sets, an empirical risk minimization-based formulation of conformal prediction, on a single trajectory of temporally correlated data from an unknown stochastic dynamical system. Using the blocking technique, we identify asymptotically stationary and contractive systems as non-iid data generating processes that RCPS attains iid-like guarantees on. Using the decoupling technique, we characterize the graceful degradation in RCPS guarantees when the underlying system deviates from stationarity and contractivity. Although we focus on RCPS, the blocking and decoupling techniques can be used to analyze the performance of other popular CP algorithms, like split conformal prediction, on non-iid data, as in \cite{oliveira_split_2022, barber_conformal_2023}. 

Furthermore, the decoupling analysis suggests weighted empirical risk minimization can be a useful tool in building adaptive conformal prediction algorithms that perform well in general non-iid data settings. This lends support to heuristic weighing schemes, like exponential smoothing. Developing more principled weighing schemes, potentially by drawing from the online learning literature, can be a fruitful direction for future work.

Because the decoupling technique has also been used to study performance guarantees of online learning algorithms \cite{rakhlin_online_2011}, future work may be able to use it toward a unified analysis of online and offline conformal prediction algorithms, which are currently treated with very different tools. Going further, it may be possible to extend \cite{angelopoulos_online_2024} and prove best-of-both-worlds \cite{mourtada_optimality_2019} and semi-adversarial \cite{bilodeau_relaxing_2023} results for CP, i.e., a single algorithm can attain desirable performance guarantees in both the iid and adversarial data settings, as well various non-iid interpolations thereof. 

\newpage
\bibliographystyle{IEEEtran}
{\small
    \bibliography{references}
}

\addtolength{\textheight}{-12cm}   

\end{document}